\begin{document}

\title{Linear Programming based Approximation to Individually Fair k-Clustering with Outliers}
\titlerunning{Individually Fair k-Clustering with Outliers}
%
\author{Binita Maity\inst{1}$^*$ \and
Shrutimoy Das\inst{1}$^*$ \and
Anirban Dasgupta\inst{1}}
\def\thefootnote{*}\makeatletter\def\Hy@Warning#1{}\makeatother\footnotetext{These authors contributed equally to this work}
\institute{Indian Institute of Technology, Gandhinagar
\email{\{binitamaity,shrutimoydas,anirbandg\}@iitgn.ac.in}}
\maketitle              
\begin{abstract}
Individual fairness guarantees are often desirable properties to have, but they become hard to formalize when the dataset contains outliers. 
Here, we investigate the problem of developing an individually fair $k$-means clustering algorithm for datasets that contain outliers. 
That is, given $n$ points and $k$ centers, we want that for each point which is not an outlier, there must be a center within the $\frac{n}{k}$ nearest neighbours of the given point. While a few of the recent works have looked into individually fair clustering, this is the first work that explores this problem in the presence of outliers for $k$-means clustering.

For this purpose, we define and solve a linear program (LP) that helps us identify the outliers. We exclude these outliers from the dataset and apply a rounding algorithm that computes the $k$ centers, such that the fairness constraint of the remaining points is satisfied. We also provide theoretical guarantees that our method leads to a guaranteed approximation of the fair radius as well as the clustering cost. We also demonstrate our techniques empirically on real-world datasets.

\keywords{Individually fair clustering  \and Outliers \and Optimization.}
\end{abstract}
\section{Introduction}
The rapid adoption of machine learning (ML) algorithms  for making real-life decisions warrants that these algorithms must not be prejudiced against certain sections of society. Such unfairness have been observed in instances such as predicting future recidivism or  defaulters on credit card payments. Fairness constrained ML algorithms aim to mitigate such biases. Depending on the nature of the  problem at hand, there are a variety of fairness notions. 

 Real-world data, however, is noisy, and such spurious data may affect the desired results. To handle such problems, several works have looked into designing algorithms  to handle such outliers and noise in the data \cite{pmlr-v124-deshpande20a},\cite{localout},\cite{im2020fastnoiseremovalkmeans}. Charikar \cite{Charikar2001AlgorithmsFF}  proposed the first work on $k$-median clustering with outliers, discarding $z$ points  identified as outliers, before solving the $k$-median problem on the inliers. However, this work does not consider the fairness metric, which makes our problem novel. Some works also use linear programs for this problem but suffer from large running times. \cite{krishnaswamy2018constantapproximationkmediankmeans},\cite{10.5555/1347082.1347173}. The authors in \cite{huang2024nearlinear} proposed a near-linear time approximation algorithm for this problem. These works focussed only on the clustering problems without any fairness constraints. With the introduction of the notion of individually fair clustering \cite{DBLP:journals/corr/abs-1908-09041}, the auhtors in \cite{Bateni2024ASA, pmlr-v151-chhaya22a} proposed scalable methods for individually fair $k$-clustering.  These papers do not consider the presence of outliers in the dataset.



In this paper, we explore the problem of fair $k$-clustering in the presence of outliers in the dataset. Given a dataset $X$ of $n$ points and a distance metric $d,$ we want to detect and exclude the outliers, and then compute a set $S = \{u_1, \ldots, u_k \}$ of $k$ centers, such that the clustering cost is $cost  (\underset{v \in X}{\sum} d(v,S)^{p}),$ where $d(v, S) = \min_{u \in S} d(v, u)$, is minimized for the inlier points. Here,  $(p=2)$ for $k$-means and $(p=1)$ for the  $k$-median  problem. Additionally, we also want to guarantee an appropriate notion of fairness.

We consider the notion of individual fairness, which was proposed by Jung et.al \cite{DBLP:journals/corr/abs-1908-09041}. This notion states that for every point $v \in X,$ 
there is a distance threshold $r(v)$, the {\em fair radius} of $v,$ which is defined as the distance from $v$ to its $\frac{n}{k}$th nearest neighbour. We then impose a constraint that every point $v$ must have a center within a ball of radius $r(v).$ Hence, we are interested in computing a set of fair centers such that the clustering objective is minimized and individual fairness is guaranteed for each point that is an inlier. Excluding the outliers ensures that the clustering cost is not influenced by those points and the fairness violations are bounded.


\begin{wrapfigure}[11]{L}[1pt]{4.5cm}
    \centering
    \includegraphics[width=1\linewidth]{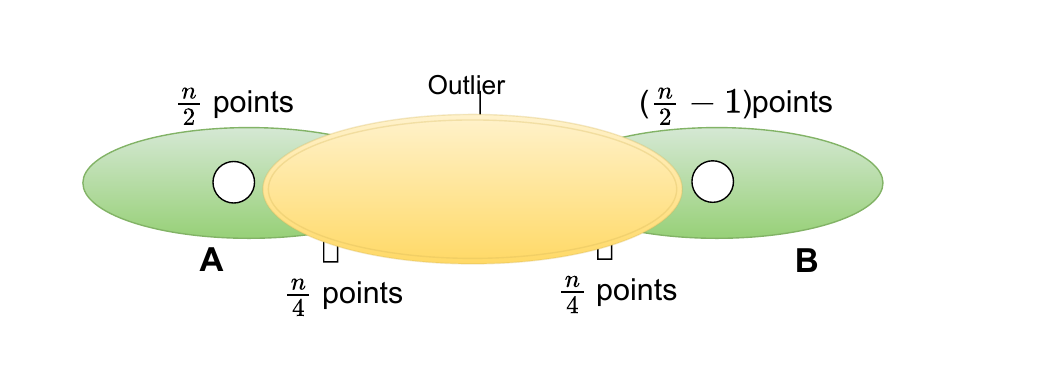}
    \caption{\label{fig:outlier_cluster}Individually fair $2$-clustering in the presence of an outlier.}
    \label{fig:outlier-example}
\end{wrapfigure}

It is easy to see that the presence of outliers can significantly alter the fair clustering objective. We give an example in Figure~\ref{fig:outlier-example}. WLOG, assume that $k = 2$ and points are in 1D. We have two (green) clusters $A$ and $B$ that are well separated, $|A| = n/2$, and $|B| = n/2 -1$. The balls indicate $n/2$ points around the corresponding centers and the outlier $p$.  There is one outlier $p$ that lies in the middle of balls $A$ and $B$. In order to satisfy the fairness condition of the outlier, the algorithm is forced to place a center in the overlapping region of the green and yellow balls. This will increase the clustering cost of either cluster $A$ or cluster $B$. 



As a motivating real-world example, is has been observed that during collection of data such as the weight of children aged between $0-5,$ some human errors results in absurd data, such as negative weights or some child having a weight of $100$kgs . Using such noisy data for designing government policies, such as placement of nursing centers, may result in unfair or inefficient implementations.

There have been a number of recent results for fair clustering. \cite{mahabadi2020ind} gave a local search based algorithm for both $k$-median and $k$-means clustering while \cite{negahbani2021better} improved upon the guarantees on the approximation quality of the objective and the fairness by defining a linear program (LP) for this purpose as well as proposing a rounding techniques for this LP.  A different LP formulation was also given in \cite{vakilian2022improved}. However, none of these works consider the setting where the dataset contains outliers. \cite{han2023approx} explored the individually fair $k$-center problem in the presence of outliers. The authors proposed an algorithm for minimizing the maximum fairness ratio of the inlier points. However, their work does not focus on minimizing the total cost. In this work, we explore the individually fair $k$-clustering problem in the presence of outliers. To the best of our knowledge, this is the first work to address this problem.








Our main contributions can be enumerated as follows :
\begin{enumerate}
    \item We define a new LP, solving which enables us to detect the outliers, by rounding the corresponding variables.
    \item We propose OutRound, an algorithm for rounding the outlier variables and recompute the remaining variables in the objective.
    \item We show that the cost of the LP obtained after rounding the outlier variables is not much worse than the original LP cost. Hence the cost of the final approximation remains within a constant factor of the optimal. 
    \item We empirically demonstrate the utility of our proposed method on various real-world datasets.
\end{enumerate}

\subsection{Outline of the Paper}
We review some of the works that have looked into the problems of clustering, fairness and outliers in Section $1.$ We define some basic notations as well as discuss some preliminaries in Section $2.$ In Section $3,$ we give the LP formulation for our problem. We discuss our proposed algorithms in Section $4$ followed by proving the approximation guarantees in Section $5.$ We demonstrate our algorithms in the experimentation section in Section $6.$ We conclude our paper in Section $7.$

\section{Preliminaries}


We briefly describe and define the following terminologies that will be used throughout the paper.
Let $X$ denote a set of $n$ datapoints which we want to partition into $k$ clusters. In this paper, we focus on  $k$-clustering where the clustering cost can be defined as ,
\begin{align} \label{eqn:kmeans_cost}
 \underset{S \subseteq X :|S| \leq k}{min} \underset{v\in X}{\sum} d(v, S)^p   
\end{align}
where $d(v, S)$ is the distance from a point $v$ to the nearest center $u \in S.$ 


    


\begin{definition}[\emph{Fair radius $r(\cdot)$.}] The fair radius for a point $v \in X$ is the radius of the ball containing the nearest $n/k$ neighbours  of $v.$
\end{definition}

\begin{definition}[\emph{Fair $(p,k)$-clustering}]  Given the set of datapoints $X$, the distance function $d(.,.),$  and the  fair radius function $r(v),$ the fair-$(p,k)$ clustering problem looks at minimizing the clustering cost such that the distance from a point to its center is at most $r(v).$ 
The problem  can be defined as follows, 
\begin{align}
 \begin{split}
     \underset{S \subseteq X :|S| \leq k}{min} &\sum_{v \in X, u \in S} d(v,u)^{p} \\
    & \text{s.t } d(v, S) \leq r(v), \forall v \in X
 \end{split}   
\end{align}
\end{definition}

We will consider the setting of the problem where there are outliers present in the data. We consider the points which are far away from a given set of centers as the outliers. 



\begin{definition}[\emph{$(\alpha, k, m)$- fair clustering excluding outliers}]
Given a set of points $X$ in a metric space $(X,d),$ a $k$-clustering using $S$ centers and $m$ outliers is $(\alpha, k)$-fair if $Z$ is denotes as outliers, $|Z| \le m$. For centers $S\subseteq X \setminus Z$, and all $v \in X \setminus Z$, $d(v, S) \leq \alpha r(v),$ where $d(v, S)$ denotes the distance of $v$ to its closest neighbour in $S$. The cost of the solution is denoted by 
\begin{align}
 \begin{split}
     \underset{Z, S \subseteq X :|S| \leq k, |Z| \le m}{min} &\sum_{v \in X\setminus Z, u \in S\setminus Z} d(v,u)^{p} \\
    & \text{s.t } d(v, S) \leq \alpha r(v), \forall v \in X\setminus Z
 \end{split}   
\end{align}

The above formulation implies that outliers are excluded from the cost calculation as well as from any fair radius guarantees. 
\end{definition}




\section{LP Formulation}

We define a linear program \eqref{lp} for solving the problem of individually fair $k$-means clustering in the presence of outliers. While \cite{negahbani2021better} proposed an LP for solving the individually fair $k$-clustering problem, it is unable to identify outliers. The LP that we propose is able to detect outliers, and is equivalent to the LP defined in \cite{negahbani2021better} when we set the number of outliers to be $0.$ In our LP formulation, we have the following variables:
\begin{itemize}
\item $\forall u,v \in X, x_{vu}$  indicates whether the  point $v$ is being assigned to the center $u$,
\item $\forall u \in X, y_u$ indicates whether a center is being opened at point $u$ ($y_u = 1$ implies $u$ is opened as a center),
\item $\forall v\in X, z_v$ indicates whether the point $v$ is being labeled as an outlier ($z_v=1$ implies $v$ is marked as an outlier).
\end{itemize}
The LP is defined as follows. 
\begin{align}
\underset{u,v}{\text{min}} & \sum_{u,v \in X} d(u,v)^{p} x_{vu} \label{lp}\tag{LP}\text{ such that }\\
& \sum_{u \in X} y_{u} \le k, \label{lp1}\tag{LP1}\\
& \sum _{v \in X} z_{v} \le m, \label{lp2}\tag{LP2}  \\
& \sum_u x_{vu} \ge 1 - z_v, \forall v \in X, \label{lp3}\tag{LP3} \\
        & x_{vu} \le y_u, \forall u, v \in X \label{lp4}\tag{LP4} \\
& y_u \le 1 - z_u, \forall u \in X, \label{lp5}\tag{LP5} \\
&  x_{v,u} =0  \text{ if } d(u,v)> \alpha r(v), \forall u,v \in X, \label{lp6}\tag{LP6}\\
&  0 \leq x_{vu}, y_{u}, z_u \le 1, \forall u, v \in X, \label{lp7}\tag{LP7}
\end{align}
where the constraints are the following:
[\eqref{lp1}] implies that the total number of centers is $k$, 
[\eqref{lp2}] constraints the total number of outliers to be at most $m$, 
 [\eqref{lp3}] says that every point that is not an outlier is assigned to one center, 
[\eqref{lp4}] says that point $v$ is assigned to point $u$ only if $u$ is being marked as a center, 
[\eqref{lp5}] implies that point $u$ is a center only if it is not an outlier, and 
[\eqref{lp6}] implies that every point is assigned to a center that is within an $\alpha$ factor of the fair radius $r(v)$. We specify this parameter in the later sections.

\section{Proposed Algorithm}

We discuss our proposed algorithm in this section. The basic idea behind Algorithm \ref{alg:algorithm}, \emph{IFXO},  is as follows: We solve the linear program defined in \eqref{lp}, detect the outliers from the solution to \eqref{lp} using our rounding algorithm, \emph{OutRound} (Algorithm \ref{alg:outround}), and then apply a rounding algorithm on the remaining inlier points in order to compute the fair centers.  For rounding the latter part of the algorithm, we employ FairRound algorithm, given in \cite{negahbani2021better}.

\begin{algorithm}[ht]
\caption{\emph{Individual Fairness Excluding Outliers (IFXO)}}
\label{alg:algorithm}
\textbf{Input}: Datapoints $X$, number of centers: $k$, number of outliers : $m$\\
\textbf{Output}: Set of fair centers, $S$
\begin{algorithmic}[1] 
\STATE Solve the linear program given in (LP) to get $(x^{*}, y^{*},z^{*})$ 
\STATE $(x',y'), \texttt{outlier\_indices} \gets \emph{OutRound()}$
\STATE The set of inliers $X^{in} \gets X \setminus X[\texttt{outlier\_indices}]$
\STATE $S \gets \text{FairRound}(X^{in}, x', y').$
\STATE \textbf{return} $S$
\end{algorithmic}
\end{algorithm}

\begin{algorithm}[ht]
\caption{\emph{OutRound}}
\label{alg:outround}
\textbf{Input}: Datapoints $X$, Number of centers: $k$, Distance metric : $d,$ Number of outliers : $m,$ threshold : $\tau$, solution of \ref{lp} : $LP_{\alpha=1}(x^*, y^*, z^*)$\\
\textbf{Output}: $(x', y')$ --- Recomputed variables for the inliers, \texttt{outlier\_indices}
\begin{algorithmic}[1] 
\FOR{ each $u \in X$}
\STATE $y'_{u} \gets y_{u}$
\FOR{ each $v \in X$}
\STATE $x'_{vu} \gets x_{vu}$
\ENDFOR
\ENDFOR
\STATE Let $z' = \{\mathbf{1}[z^{*}[v] > \tau]] : \forall v \in X \},$ for some threshold, $\tau.$ Store the indices of the nonzeros in  \texttt{outlier\_indices}
\STATE for each $v \in X$ such that $z'[v] = 1,$ set $x'_{vu} = 0, \forall u \in X$
\STATE Let $U_{OUT} = \{u : y_{u} \neq 0 \wedge z'[u] = 1 \},$ the set of outliers that are also open as centers
\FOR {each $u_{OUT} \in U_{OUT}$}
\STATE \label{line:y_prime_out} Set $y'_{u_{OUT}} = 0$
\STATE \label{line:assign_u'}$u' \gets   \underset{u \in X \setminus X[\texttt{outlier\_indices}]}{\text{argmin 
 }}d(u_{OUT}, u)$  
\STATE  $y'_{u'} \gets \text{min}(y_{u'} +  y_{u_{OUT}}, 1)$
 // set the point nearest to $u_{OUT}$ as the center
\FOR{$\forall v \in X$ such that $x_{vu_{OUT}} \neq 0$}
\STATE \label{line:x_prime_vu} $x'_{vu'} \gets x_{vu'} + x_{vu_{OUT}}$
\STATE // Clip $x'_{vu'}$ such that $x'_{vu'} \leq y_{u'}$
\ENDFOR
\ENDFOR
\STATE \textbf{return} $(x',y'),$ \texttt{outlier\_indices}
\end{algorithmic}
\end{algorithm}

Our algorithm aims to create $k$ clusters of the dataset $X,$ such that the individual fairness constraint is satisfied for the inlier points. Hence, the first step is to detect and exclude the outliers. We solve \eqref{lp} for this purpose. We round the variables associated with the outliers by thresholding the values of $z^*.$ This gives us the indices of the points marked as outliers. However, the variables of \eqref{lp} also depend on the values of $z^{*},$ as shown by the constraints in \eqref{lp3} and \eqref{lp5}. Thus, rounding $z^{*}$ will also affect these variables. Specifically, suppose a point is marked as an outlier. Then, it will not be assigned to an open center. For such a point $u_{OUT},$ we can safely set $x'_{u_{OUT}u} = 0$ for all open centers $u.$ However, if $u_{OUT}$ is also open as a center, then we cannot set $x'_{vu_{OUT}} = 0$ directly. This is because  for all points $v$ that have been assigned to $u_{OUT},$ they have to be reassigned to a new center such that the constraint \eqref{lp3} remains satisfied. These cases have been handled in Algorithm \ref{alg:outround}. We note here that Algorithm \ref{alg:outround} outputs recomputed values of the $x$ and $y$ variables after thresholding the $z$ values. These recomputed $x$ and $y$ values are clipped such that the constraints of \eqref{lp} are satisfied. Also, the solution to \eqref{lp} satisfies \eqref{lp6} with $\alpha=1$ whereas after the \emph{OutRound} procedure, the constraint in \eqref{lp6} is satisfied with $\alpha=2.$


\section{Approximation Guarantees}

\begin{theorem}\label{thm1}
    Suppose the optimal cost for \eqref{lp} is  $LP_{\alpha=1}(x^*, y^*, z^*).$ Running Algorithm \ref{alg:outround} with $(x^*, y^*, z^*)$ as inputs  and threshold $\tau = 0,$ results in the cost $LP_{\alpha=2}(x',y',z'),$ where $z' = \{\mathbf{1}[z^{*}[v] > \tau] : \forall v \in X \}.$ Then, 
    \begin{align}\label{eqn:thm1}
    LP_{\alpha=2}(x',y',z') \leq 3 LP_{\alpha=1}(x^*, y^*,z^*).
    \end{align}
    Consequently, since the FairRound algorithm in \cite{negahbani2021better} gives a $4$ approximate solution for individually fair $k$-means clustering and $8$ approximate solution for $k$-median clustering, the algorithm \texttt{IFXO} given in \ref{alg:algorithm} gives a $12$ approximate solution to the optimal for $k$-means and $24$ approximate solution to the optimal for $k$-median clustering. 
\end{theorem}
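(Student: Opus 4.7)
The plan is to first establish that the output $(x', y', z')$ of OutRound is LP-feasible with the relaxed fair-radius constraint \eqref{lp6} at $\alpha = 2$, and then bound its objective value in terms of the optimal LP cost. Most feasibility conditions follow directly from the construction: shifting $y^*_{u_{OUT}}$ mass to the nearest non-outlier $u'$ via $y'_{u'} = \min(y^*_{u'} + y^*_{u_{OUT}}, 1)$ preserves $\sum_u y'_u \le k$, zeroing out the $x'$ rows of outlier clients respects \eqref{lp3} vacuously for those points, and the explicit clipping enforces \eqref{lp4} and \eqref{lp7}. The only constraint that genuinely requires relaxation is the fair-radius bound \eqref{lp6}.

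The central geometric claim I would prove is: whenever a non-outlier client $v$ (with $z^*_v = 0$) has its assignment mass redirected from an outlier center $u_{OUT}$ to the nearest non-outlier $u'$, one has $d(v, u') \le 2\,d(v, u_{OUT}) \le 2\,r(v)$. To see this, $x^*_{v, u_{OUT}} > 0$ combined with \eqref{lp6} at $\alpha = 1$ forces $d(v, u_{OUT}) \le r(v)$; since $v$ itself is a non-outlier at distance $d(v, u_{OUT})$ from $u_{OUT}$, the minimizer $u'$ in line~\ref{line:assign_u'} satisfies $d(u_{OUT}, u') \le d(v, u_{OUT})$, and the triangle inequality closes the argument. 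This single inequality simultaneously delivers the $\alpha = 2$ feasibility of \eqref{lp6} and the pointwise distance-doubling bound that drives the cost analysis.

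For the cost, I would decompose $\sum_{v,u} d(v,u)^p x'_{vu}$ into (i) contributions from pairs $(v,u)$ where $x'_{vu}$ equals the original $x^*_{vu}$ (both non-outliers, no redirection), which are bounded directly by $LP_{\alpha=1}(x^*, y^*, z^*)$, and (ii) contributions where $x'_{vu'}$ absorbs mass $x^*_{v, u_{OUT}}$ from some outlier center. For part (ii), the geometric claim gives $d(v, u')^p \le 2^p d(v, u_{OUT})^p$, so summing over all redirections is bounded by $2^p \cdot LP_{\alpha=1}(x^*, y^*, z^*)$, yielding the factor promised in \eqref{eqn:thm1}. The overall IFXO approximation factors of $12$ and $24$ then follow by composition: FairRound applied to the inlier subset produces centers whose cost is within the stated factor ($4$ for $k$-means, $8$ for $k$-median) of $LP_{\alpha=2}(x', y', z')$.

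The main obstacle I foresee is the sequential nature of OutRound's redistribution. When several outlier centers share the same nearest non-outlier $u'$, the cumulative $\min(\cdot, 1)$ clipping on $y'_{u'}$ (and the parallel clipping on $x'_{vu'}$ in line~\ref{line:x_prime_vu}) can shave off mass, so the summed fractional assignments to inliers may drop below $1$ and threaten \eqref{lp3}. For the cost upper bound this only helps, but verifying that FairRound still accepts the resulting fractional solution is a delicate point; a careful argument either splits the overflow to a secondary non-outlier or appeals to robustness of the rounding in \cite{negahbani2021better}. A smaller subtlety is that thresholding at $\tau = 0$ may activate more than $m$ indicators in $z'$, so \eqref{lp2} can be violated by the rounded $z'$; since Theorem \ref{thm1} only compares the LP objective and fair-radius parameters, this is orthogonal to the stated inequality but worth flagging when combining with the $(\alpha,k,m)$-fair definition.
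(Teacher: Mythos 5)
Your proposal follows essentially the same route as the paper: your central geometric claim $d(v,u') \le 2\,d(v,u_{OUT}) \le 2\,r(v)$ is exactly Lemma~\ref{lem:duv} (proved identically via the minimizer property of $u'$ over the non-outliers and the triangle inequality), and your split of the cost into unchanged pairs plus redirected mass is precisely the $A+B$ decomposition in Equation~\eqref{eqn:sum_ineq}, composed with FairRound in the same way. If anything you are more careful than the paper: retaining the exponent gives a redirected-mass factor of $2^p$ (so $1+2^p=5$ for $k$-means, not the claimed $3$ --- the paper's displayed inequalities silently drop the power $p$), and the feasibility subtleties you flag (clipping of $y'$ and $x'$, and $\tau=0$ possibly marking more than $m$ outliers) are real issues the paper does not address.
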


We will prove Equation \eqref{eqn:thm1} here. The latter part of the theorem can be proved using the proof techniques in \cite{negahbani2021better}. Note that we do not show any bounds for the number of outliers detected. Empirically, we observed (Table \ref{tab:k_m}) that our LP marks at most $3m$ points as outliers when we set $\tau = 0.$ Varying the value of $\tau$ controls the number of outliers detected. For higher threshold values $\tau,$ the number of outliers detected will be lower. The proof of Theorem \ref{thm1} holds when $\tau =0.$ This ensures that the $x_{vu}$ values always decrease whenever a point $v$ is marked as an outlier. 
. 
In the discussions that follow, for a point $v \in X,$ we define $\texttt{FairBall(v)}$ as $\{u: u \in X \wedge d(v,u) \leq r(v) \},$ where $r(v)$ is the fair radius for $v.$ We use the following lemma (Lemma \ref{lem:duv}) for our proof  of Theorem \ref{thm1}.


\begin{lemma}\label{lem:duv}
  Let $u_{OUT} \in X$ be a point such that it is open as a center and it has been marked as an outlier. We compute $u' \in X$ as given in line \ref{line:assign_u'} of Algorithm \ref{alg:outround}. Let $v \in X$ be another point such that $x_{vu_{OUT}} \neq 0.$ Then,
    \begin{align}\label{eqn:lem1}
        d(v,u') \leq 2d(v, u_{OUT}) \leq 2r(v).
    \end{align}
\end{lemma}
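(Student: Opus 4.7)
The plan is to prove both inequalities by combining the LP constraint \eqref{lp6} with the triangle inequality, exploiting the fact that $u'$ is defined as the nearest non-outlier to $u_{OUT}$.

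First, I would establish the right-hand inequality $d(v, u_{OUT}) \le r(v)$ directly from the LP feasibility. Since $x_{vu_{OUT}} \ne 0$ in the LP solution $(x^*,y^*,z^*)$ and the solution satisfies \eqref{lp6} with $\alpha=1$, any center to which $v$ is fractionally assigned must lie within $r(v)$ of $v$. In particular this pins down $d(v, u_{OUT}) \le r(v)$ without any additional argument.

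Next, I would bound $d(u_{OUT}, u')$ by observing that $v$ itself is an admissible candidate in the argmin of line~\ref{line:assign_u'}. The lemma is being applied in the inlier regime (otherwise $x'_{vu} = 0$ is set for $v$ in line~8 of Algorithm~\ref{alg:outround} and the distance to $u'$ is irrelevant to the cost), so $v \in X \setminus X[\texttt{outlier\_indices}]$. Because $u'$ is chosen as $\operatorname{argmin}_{u \in X \setminus X[\texttt{outlier\_indices}]} d(u_{OUT}, u)$, we obtain $d(u_{OUT}, u') \le d(u_{OUT}, v) \le r(v)$. Then a single application of the triangle inequality yields
\begin{equation*}
d(v, u') \le d(v, u_{OUT}) + d(u_{OUT}, u') \le 2\,d(v, u_{OUT}) \le 2\,r(v),
\end{equation*}
which is exactly \eqref{eqn:lem1}.

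The only subtle point, and the one I would be careful about in writing up, is the tacit assumption that $v$ is itself not one of the points marked as an outlier. If $v$ had been marked as an outlier, then $v$ would not be a valid candidate in the argmin, and there would be no a priori non-outlier point guaranteed to lie within $r(v)$ of $u_{OUT}$. I would therefore frame the lemma as being applied to inlier $v$'s (the only case that contributes to $LP_{\alpha=2}(x',y',z')$, since outlier rows are zeroed out), and note explicitly that this is the setting in which the bound is needed for Theorem~\ref{thm1}. With this caveat in place, the proof reduces to the two-line triangle-inequality argument above, and no further calculation is needed.
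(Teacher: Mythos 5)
Your proof is correct and follows essentially the same route as the paper's: $d(v,u_{OUT}) \le r(v)$ from \eqref{lp6} with $\alpha=1$, the argmin in line~\ref{line:assign_u'} giving $d(u_{OUT},u') \le d(u_{OUT},v)$, and one triangle inequality. Your caveat that $v$ must itself be an inlier for the argmin comparison to be valid is a point the paper's proof leaves implicit, and it is correctly resolved by noting that only inlier $v$'s contribute to the bound in Theorem~\ref{thm1}.
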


\begin{proof}
Since $x_{vu_{OUT}} \neq 0$, $d(v, u_{OUT})\le r(v)$. Note that by choice of $u'$, $d(u_{OUT}, u') \le d(u_{OUT}, v)$. 
Hence, by triangle inequality, 
$d(v, u') \le d(v, u_{OUT}) + d(u_{OUT}, u') \le 2r(v)$.
\end{proof}



\textbf{Proof of Theorem \ref{thm1} :}
\begin{proof}
The OutRound$(x^*, y^*, z^*)$ algorithm outputs $(x',y'),$ which are the recomputed values for $(x,y)$ variables after rounding the set of variables $z^*.$  Let $z'$ be the set of variables that we get after rounding  $z^*.$ 
Suppose a point is marked as an outlier, i.e,  for some $u_{OUT} \in X, \text{ }z'[u_{OUT}] = 1 .$ If $y_{u_{OUT}} = 0,$ then we need not make any changes. Otherwise, if it is also open as a  center, $y_{u_{OUT}} \neq 0,$  there will be some  points $v$ such that $u_{OUT} \in \texttt{Fairball($v$)}$ and thus $x_{vu_{OUT}} \neq 0.$

In line \ref{line:y_prime_out} of OutRound, we set $y'_{u_{OUT}} = 0.$ However, we have to reassign the amount $x_{vu_{OUT}}$ such that the constraint in \ref{lp3} is satisfied.

In line \ref{line:x_prime_vu} of Algorithm \ref{alg:outround}, for all the points that have been assigned to a center that is in $U_{OUT},$ we update $x'_{vu'} \gets x_{vu'} + x_{vu_{OUT}}.$ Then, using Lemma \ref{lem:duv}, for a point $v \in X,$ we have,

\begin{align}
\begin{split}
    d(v,u')x'_{vu'}&=d(v, u') (x_{vu'} + x_{vu_{OUT}}) \\
    &\leq d(v,u')x_{vu'} + 2d(v,u_{out})x_{vu_{OUT}}
\end{split}
\end{align}

Let $X_{OUT}$ be the set of points that have been marked as outliers.
The contribution of the remaining inliers to the LP cost is


\begin{align}\label{eqn:sum_ineq}
\begin{split}
    \underset{v, u' \in X \setminus X_{OUT}}{\sum}d(v,u')x'_{vu'} &\leq \underbrace{\underset{v \in X \setminus X_{OUT}}{\sum} \underset{u' \in X \setminus X_{OUT}}{\sum} d(v,u')x_{vu'}}_{A} +  \\
        &\underbrace{\underset{v \in X \setminus X_{OUT}}{\sum}\underset{u_{OUT} \in U_{OUT}}{\sum} 2d(v,u_{OUT})x_{vu_{OUT}}}_{B}
\end{split}
\end{align}
 




For part \textit{A} in Equation \eqref{eqn:sum_ineq}, we have
\begin{align}\label{eqn:part_a_bound}
    \underset{v \in X \setminus X_{OUT}}{\sum} \underset{u' \in X \setminus X_{OUT}}{\sum}d(v,u')x_{vu'} \leq \underset{v,u \in X}{\sum} d(v,u)x_{vu}, 
\end{align}
 since the RHS is over a larger set of points. 

For part \textit{B}, the following inequality holds. The LHS summation is again over a subset of points. Hence,
\begin{align}\label{eqn:part_b_bound}
    \underset{\substack{{v\in X \setminus X_{OUT},} \\ {u_{OUT} \in U_{OUT}}}}{\sum} 2d(v,u_{OUT})x_{vu_{OUT}} \leq 2\underset{v,u \in X}{\sum} d(v,u)x_{vu}.
\end{align}



Therefore, from Equations \eqref{eqn:sum_ineq}, \eqref{eqn:part_a_bound}, \eqref{eqn:part_b_bound}, we have

\begin{align}
\begin{split}
\underset{v,u' \in X \setminus X_{OUT}}{\sum}d(v,u')x'_{vu'} \leq &\underset{v,u \in X}{\sum} d(v,u)x_{vu} +2 \underset{v,u \in X}{\sum} d(v,u)x_{vu}\\
\implies LP_{\alpha=2}(x',y') \leq & 3LP_{\alpha=1}(x^{*},y^{*},z^{*})
\end{split}
\end{align} 

Let $FR(x',y')$ refer to the cost that we get after running the FairRound algorithm. Thus, from Theorem $1$ of \cite{negahbani2021better}, we get,
\begin{itemize}
    \item for $k$-means clustering
    \begin{align}
    FR(x',y') \leq 4LP_{\alpha=2}(x',y') \leq 12LP_{\alpha=1}(x^{*},y^{*},z^{*}) \leq 12OPT_{2}, 
\end{align}
\item for $k$-median clustering
\begin{align}
    FR(x',y') \leq 8LP_{\alpha=2}(x',y') \leq 24LP_{\alpha=1}(x^{*},y^{*},z^{*}) \leq 24OPT_{1}. 
\end{align}
\end{itemize}
    
where $OPT_{2}$ is the optimal solution for an integer programming formulation of \eqref{lp} for $k$-means and $OPT_{1}$ is the corresponding optimal for $k$-median clustering.
\end{proof}

\subsection{Comment on the approximation for fair radius}
In \cite{negahbani2021better} it is shown that for each $v \in X,$ the FairRound algorithm gives an $8$ approximation to the fair radius. In Lemma \ref{lem:duv}, we show that the fair radius is violated by a factor of atmost $2,$ when the center is reassigned. Thus, we will  have a $16$-approximation to the fair radius for the inliers, both for $k$-means as well as $k$-median clustering. 

\begin{wrapfigure}[14]{L}[1pt]{5cm}
    \centering
    \includegraphics[width=\linewidth]{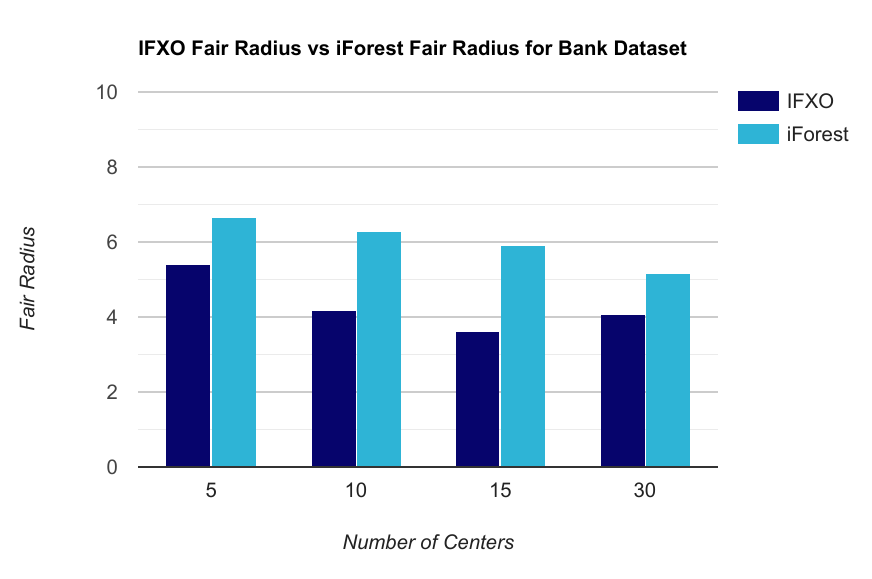}
    \caption{\label{fig:fair_radii_kmeans}Maximum Fairness Radius for the \emph{Bank} dataset for different number of clusters.}
\end{wrapfigure}


\section{Experiments}
We tested our proposed method on a variety of datasets and report the results. We experimented on the following datasets from UCI machine Learning Repository \cite{UCI}. These datasets are standard benchmarks for fair $k$-clustering \cite{DBLP:journals/corr/abs-1802-05733,chen2020proportionallyfairclustering,backurs2019scalablefairclustering,bera2019fairalgorithmsclustering,huang2019coresetsclusteringfairnessconstraints}. 
\begin{itemize}
    \item \textit{Bank:} This is Portuguese Bank dataset\cite{misc_bank_marketing_222}. There are three features age, balance and duration-of-account with 45211 points. 
    \item \textit{Adult:} This dataset corresponds to the 1998 US Census  \cite{misc_adult_2} with five features age, fnlwgt, education-num, capital-gain and hours-per-week with 48842 points.
    \item \textit{Diabetes:}  This dataset contains  two features age and time-in-hospital, with 101,766 points. 
\end{itemize}
We note here that solving the LP \eqref{lp} presents a computational bottleneck for our method. Thus, as done is \cite{negahbani2021better}, we use 1000 randomly sampled points for our experiments. In order to solve \eqref{lp}, we used  the  \texttt{CPLEX}\cite{cplex2009v12} optimizer (academic version). We conduct our experiments on a system with Intel(R) Xeon(R) Gold 5120 CPU and Ubuntu version 22.04 LTS.

\begin{wraptable}[15]{R}[3pt]{5.5cm}
\centering
\begin{tabular}{|llll|}
\hline
\multicolumn{4}{|l|}{Dataset Name: Bank}                                      \\ \hline
\multicolumn{1}{|l|}{\textbf{k}} & \multicolumn{1}{l|}{$m=0$} & \multicolumn{1}{l|}{$m=10$} & \multicolumn{1}{l|}{\begin{tabular}[c]{@{}l@{}}$m=10$\\ (iForest)\end{tabular}} \\ \hline
\multicolumn{1}{|l|}{5}       & \multicolumn{1}{l|}{34.77}  & \multicolumn{1}{l|}{\textbf{27.55}}  & \multicolumn{1}{l|}{29.99} \\ \hline
\multicolumn{1}{|l|}{10}      & \multicolumn{1}{l|}{30.12}  & \multicolumn{1}{l|}{\textbf{22.04}} & \multicolumn{1}{l|}{23.99} \\ \hline
\multicolumn{1}{|l|}{15}      & \multicolumn{1}{l|}{25.71}  & \multicolumn{1}{l|}{\textbf{19.20}} & \multicolumn{1}{l|}{20.86} \\ \hline
\multicolumn{1}{|l|}{30}      & \multicolumn{1}{l|}{19.03}  & \multicolumn{1}{l|}{\textbf{14.86}} & \multicolumn{1}{l|}{16.01} \\ \hline
\end{tabular}
\caption{\label{tab:kmeans_wo_outliers}Comparison of the cost after running \emph{IFXO} algorithm, removing $(m=10)$ and without removing $(m=0)$ outliers for $k$-means clustering. The third column displays the cost of running the FairRound algorithm after removing $10$ outliers using iForest.}
\end{wraptable}

As a baseline, we detect and remove the outliers using the iForest \cite{liu2008iforest} algorithm and then apply the FairRound algorithm on the remaining inlier points. Also, for each of the experiments, we apply standard scaling to the dataset in order to have $0$ mean and unit variance.


We set $m$, the number of outliers, as $1\%$ of the dataset size and  observed that the number of nonzero values for the variable $z$ in \eqref{lp} is  at most $3m.$ 
Thus, we set the threshold $\tau=0$ for marking the outlier points. We report the number of outliers detected in Table \ref{tab:k_m}.

\paragraph{\textbf{Introducing outliers}}
Since we are using a sample of $1000$ datapoints, it is possible that there might not be outliers present in the sample. Hence, in order to artificially introduce outliers, we randomly sampled $1\%$ of the already sampled points and added a uniform noise to each of the features of these points. For a feature $col,$ we add a noise sampled from $Uniform(0,col\_max),$ where $col\_max$ is the maximum feature value for $col$ over all the points. This method of generating outliers have been used in \cite{localout}.

\begin{table}[ht]
\centering
\begin{tabular}{@{} lcccccc @{}}
\toprule
& \multicolumn{3}{c}{p=1} & \multicolumn{3}{c}{p=2}                       \\ 
\cmidrule(r){2-4}\cmidrule(l){5-7}
\textbf{k} & \textbf{Bank} & \textbf{Adult} & \textbf{Diabetes} & \textbf{Bank} & \textbf{Adult} & \textbf{Diabetes}   \\
5     & 19      & 18   & 30     & 27  & 17      & 10        \\
10     & 18      & 28   & 27     & 23   & 21      & 22      \\
15 & 25      & 26   & 11     & 23    & 17      & 10       \\
20     & 15      & 15   & 15     & 17    & 10      & 10       \\
\bottomrule
\end{tabular}
\caption{\label{tab:k_m}Number of outliers detected after solving \eqref{lp} and thresholding with $\tau=0,$ for $k$-median ($p=1$) and $k$-means ($p=2$) clustering. For this experiment, we set $m=10.$}
\end{table}

\begin{table}[ht]
\centering
\begin{tabular}{@{} lcccccc @{}}
\toprule
& \multicolumn{3}{c}{p=1} & \multicolumn{3}{c}{p=2}                       \\ 
\cmidrule(r){2-4}\cmidrule(l){5-7}
\textbf{k} & $LP^*$ & $LP'$ & $FR$ & $LP^*$ & $LP'$ & $FR$   \\
5  \quad\quad   & 591.24   \quad   & 559.13 \quad   & 572.81 \quad  \quad   & 21.68 \quad  & 21.68 \quad      & 21.68        \\
10   \quad\quad  & 392.24  \quad    & 374.63 \quad  & 368.50 \quad \quad   & 15.54   & 15.03      & 15.06      \\
15 \quad\quad & 284.64  \quad    & 283.34 \quad  & 283.34 \quad\quad    & 11.95    & 11.95      & 11.95       \\
30 \quad\quad     & 128.49  \quad    & 128.91 \quad  & 125.32 \quad\quad    & 7.54    & 7.54      & 7.54       \\
\bottomrule
\end{tabular}
\caption{\label{tab:diabetes_lp_compare}Comparison of the $k$-median ($p=1$) and $k$-means($p=2$) costs $LP^* = LP(x^*,y^*,z^*),$  $LP'=LP(x',y')$ and $FR=FairRound(x',y')$ for the \emph{Diabetes} dataset. Note that in some of the cases the change in costs across the different stages of the LP rounding is very small (requiring more than just two decimal places of precision).}
\end{table}

In Figure \ref{fig:fair_radii_kmeans}, we compare the maximum fair radius for the points after running  \emph{IFXO} and the baseline algorithm. \emph{IFXO} always gives a smaller fair radius than the baseline.

In Table \ref{tab:kmeans_wo_outliers}, we compare the cost obtained after running \emph{IFXO} and iForest on the \emph{Bank} dataset for $k$-means clustering.  Since, we remove the outliers, the costs in the second column are lower. It can also be observed that the costs obtained using our method is lower than using iForest for removing outliers. 

In Tables  \ref{tab:diabetes_lp_compare}, and \ref{tab:bank_lp_compare},  we compare the $k$-median and $k$-means clustering costs mentioned in Theorem \ref{thm1}. Empirically, it was observed that most of the $x_{vu},$ $y_v$ and $z_v$ values are zeros. So, the OutRound algorithm just sets the  $x_{vu}$ and $y_v$ values to zero if $z'[v] > 0.$ Thus, the cost LP$(x', y')$ is computed on a smaller number of points with no increase in the $x_{vu}$ values. Hence, we observe that LP$(x', y') < $ LP$(x^*, y^*, z^*).$ As expected, the FR$(x', y')$ costs are greater than LP$(x', y').$ 
\begin{table}[ht]
\centering
\begin{tabular}{@{} lcccccc @{}}
\toprule
& \multicolumn{3}{c}{p=1} & \multicolumn{3}{c}{p=2}                       \\ 
\cmidrule(r){2-4}\cmidrule(l){5-7}
\textbf{k} & $LP^*$ & $LP'$ & $FR$ & $LP^*$ & $LP'$ & $FR$   \\
5  \quad\quad   & 732.30   \quad   & 702.34 \quad   & 697.29 \quad  \quad   & 29.30 \quad  & 27.33 \quad      & 27.55        \\
10   \quad\quad  & 574.34  \quad    & 554.08 \quad  & 555.09 \quad \quad   & 23.07   & 22.14      & 22.04      \\
15 \quad\quad & 496.24  \quad    & 470.31 \quad  & 471.70 \quad\quad    & 20.08    & 19.12      & 19.20       \\
30 \quad\quad     & 378.10  \quad    & 368.53 \quad  & 367.83 \quad\quad    & 15.31    & 14.75      & 14.86       \\
\bottomrule
\end{tabular}
\caption{\label{tab:bank_lp_compare}Comparison of the $k$-median ($p=1$) and $k$-means($p=2$) costs $LP^* = LP(x^*,y^*,z^*),$  $LP'=LP(x',y')$ and $FR=FairRound(x',y')$ for the \emph{Bank} dataset.}
\end{table}


\section{Conclusion and Future Work}
We presented a novel LP formulation for the problem of individually fair $k$-clustering for the case of both $k$-median and $k$-means. We presented the OutRound algorithm using which we derived the  approximation guarantees to the optimal. We also demonstrated our algorithm on three popular datasets. One direction of future work would be to derive bounds on the number of outliers detected by the OutRound algorithm. 

\begingroup
\renewcommand{\clearpage}{}

\bibliography{ref}

\endgroup

\end{document}